% Template for ICIP-2017 paper; to be used with:
%          spconf.sty  - ICASSP/ICIP LaTeX style file, and
%          IEEEbib.bst - IEEE bibliography style file.
% --------------------------------------------------------------------------
\documentclass{article}
\usepackage{spconf,amsmath}
\usepackage{cite}
\usepackage{algorithm}
\usepackage{amssymb}
\usepackage{epsfig}
\usepackage{array}
\usepackage{subfigure}
\usepackage{bm}
\usepackage{color}
\usepackage{latexsym}
\usepackage{multirow}
\usepackage{algorithm}
\usepackage{algorithmic}
\usepackage{graphicx}
\usepackage{epstopdf}
\usepackage{amsthm}

% Example definitions.
% --------------------
%\def\x{{\mathbf x}}
%\def\L{{\cal L}}

% Theorem Styles
\newtheorem{theorem}{Theorem}%[section]
\newtheorem{lemma}[theorem]{Lemma}

% Title.
% ------
\title{Graph Fourier Transform with Negative Edges for Depth Image Coding}
%
% Single address.
% ---------------
\name{Weng-Tai Su{\small $~^{\ast}$},
Gene Cheung{\small $~^{\#}$},
Chia-Wen Lin{\small $~^{\ast}$}}
\address{$^{\ast}$\,National Tsing Hua University,
$^{\#}$\,National Institute of Informatics}
% For example:
% ------------
%\address{School\\
%	Department\\
%	Address}
%
% Two addresses (uncomment and modify for two-address case).
% ----------------------------------------------------------
%\twoauthors
%  {A. Author-one, B. Author-two\sthanks{Thanks to XYZ agency for funding.}}
%	{School A-B\\
%	Department A-B\\
%	Address A-B}
%  {C. Author-three, D. Author-four\sthanks{The fourth author performed the work
%	while at ...}}
%	{School C-D\\
%	Department C-D\\
%	Address C-D}
%
\begin{document}
%\ninept
%
\maketitle
\begin{abstract}
Recent advent in graph signal processing (GSP) has led to the development of new graph-based transforms and wavelets for image / video coding, where the underlying graph describes inter-pixel correlations.
In this paper, we develop a new transform called signed graph Fourier transform (SGFT), where the underlying graph $\mathcal{G}$ contains negative edges that describe anti-correlations between pixel pairs.
Specifically, we first construct a one-state Markov process that models both inter-pixel correlations and anti-correlations.
We then derive the corresponding precision matrix, and show that the loopy graph Laplacian matrix $\mathbf{Q}$ of a graph $\mathcal{G}$ with a negative edge and two self-loops at its end nodes is approximately equivalent. This proves that the eigenvectors of $\mathbf{Q}$---called SGFT---approximates the optimal Karhunen-Lo\`eve Transform (KLT).
We show the importance of the self-loops in $\mathcal{G}$ to ensure $\mathbf{Q}$ is positive semi-definite.
We prove that the first eigenvector of $\mathbf{Q}$ is piecewise constant (PWC), and thus can well approximate a piecewise smooth (PWS) signal like a depth image.
Experimental results show that a block-based coding scheme based on SGFT outperforms a previous scheme using graph transforms with only positive edges for several depth images.
\end{abstract}
\begin{keywords}
Graph signal processing, transform coding, image compression
\end{keywords}

\vspace{-0.08in}
\section{Introduction}
\label{sec:intro}
\vspace{-0.05in}
The advent of \textit{graph signal processing} (GSP) \cite{shuman13}---the study of signals that live on irregular data kernels described by graphs---has led to the development of new graph-based tools for coding of images and videos \cite{shen10pcs,hu12icip,hu15,pavez15,hu15spl,rotondo15,narang09,chao15}.
Among them are variants of \textit{graph Fourier transforms} (GFT) \cite{shen10pcs,hu12icip,hu15,pavez15,hu15spl,rotondo15} for compact signal representation in the transform domain, where an underlying graph reflects inter-pixel correlations.
Because a graphical model is versatile in describing correlation patterns in a pixel patch, recent works like \cite{hu15} have shown significant coding gain over state-of-the-art codecs like HEVC for piecewise smooth (PWS) images like depth maps.

Opposite to the notion of ``correlation" or ``similarity" is the notion of ``anti-correlation" or ``dissimilarity".
If two variables $i$ and $j$ are anti-correlated, then their respective sample values $x_i$ and $x_j$ are very different with a high probability.
We model anti-correlation with a \textit{negative} edge with weight $w_{i,j} < 0$ connecting nodes $i$ and $j$.
The meaning of a negative edge is very different from no edge, which implies conditional independence between the two variables for a Gaussian Markov Random Field (GMRF) model. 
Recent research in data mining \cite{kunegis10}, control \cite{zelazo14,chen16} and social network analysis \cite{chu16} has shown that explicitly expressing anti-correlation in a graphical model can lead to enhanced performance in different problem domains. 

Inspired by these earlier works \cite{kunegis10,zelazo14,chen16,chu16}, in this paper we develop a new transform called \textit{signed graph Fourier transform} (SGFT), where the underlying graph $\mathcal{G}$ contains negative edges that describe anti-correlations between pixel pairs.
Specifically, we first construct a one-state Markov process that models both inter-pixel correlations and anti-correlations in an $N$-pixel row, and derive the corresponding precision matrix $\mathbf{P}$. 
We then design an $N$-node graph $\mathcal{G}$ with a negative edge and two self-loops at its end nodes, and show that the corresponding loopy graph Laplacian matrix $\mathbf{Q}$ \cite{dorfler13}---the sum of the graph Laplacian matrix and a diagonal matrix containing self-loop weights---is approximately equivalent to $\mathbf{P}$. 
This proves that the eigenvectors of $\mathbf{Q}$---called SGFT---approximates the optimal \textit{Karhunen-Lo\`eve Transform} (KLT) in signal decorrelation.

Moreover, we show the importance of the self-loops in $\mathcal{G}$ to guarantee that $\mathbf{Q}$ is positive semi-definite, and hence its eigenvalues are non-negative and can be properly interpreted as graph frequencies.
We prove that the first eigenvector of $\mathbf{Q}$ is piecewise constant (PWC), and thus can well approximate a PWS signal like a depth image.
Experimental results show that a block-based coding scheme based on SGFT outperforms a previous proposal \cite{hu15} using graph transforms with only positive edges for several depth images.

The outline of the paper is as follows.
In Section \ref{sec:transform}, we describe a one-state Markov process, and show that the loopy graph Laplacian $\mathbf{Q}$ of a carefully constructed graph is equivalent to the corresponding precision matrix.
We describe our depth map coding algorithm based on SGFT in Section \ref{sec:code}.
Experimental results and conclusion are presented in Section \ref{sec:results} and \ref{sec:conclude}, respectively.

%\vspace{-0.05in}
%\section{Related Work}
%\label{sec:related}
%\input{related}

\vspace{-0.08in}
\section{Signed Graph Fourier Transform}
\label{sec:transform}
\vspace{-0.05in}
\subsection{Markov Process with Anti-Correlation}
\label{subsec:markov}

\vspace{-0.05in}
As done in previous signal decorrelation analysis \cite{han12,hu15,hu15spl}, we assume a one-state Markov process of length $N$ for 1D variable vector $\mathbf{x}$.
Specifically, we assume first that the first pixel $x_1$ is a zero-mean random variable $z_1$ with variance $\sigma_1^2$.
We then assume that the difference between a new pixel $x_i$ and a previous pixel $x_{i-1}$ is a zero-mean random variable $z_i$ with variance $\sigma_i^2$.

The exception is the $k$-th variable $x_k$, where we assume that the \textit{sum} of $x_k$ and $x_{k-1}$ is a zero-mean random variable $z_k$ with variance $\sigma_k^2$.
Assuming that $x_i \in [-R, R]$, this assumption means $x_k$ and $x_{k-1}$ are \textit{anti-correlated}; \textit{i.e.}, if $x_{k-1}$ is a large positive (negative) number, then $x_k$ is a large negative (positive) number with high probability.
We summarize the equations below:

\vspace{-0.3in}
\begin{align}
x_1 & = z_1 \nonumber \\
x_2 - x_1 & = z_2 \nonumber \\
\vdots \nonumber \\
x_k + x_{k-1} & = z_k \nonumber \\
\vdots \nonumber \\
x_N - x_{N-1} & = z_N
\end{align}
%where $z_i$ is a zero-mean random variable with variance $\sigma_i^2$.
We can write the above in matrix form:
\begin{align}
\underbrace{\left[ \begin{array}{ccccc}
1 & 0 & 0 & \ldots & 0 \\
-1 & 1 & 0 & \ldots & 0 \\
\vdots & \ddots & & & \\
\ldots 0 & 1 & 1 & 0 \ldots \\
\vdots & & & \ddots & \\
0 & \ldots & 0 & -1 & 1
\end{array} \right]}_{\mathbf{M}}
\mathbf{x} = \mathbf{z}
\end{align}
or $\mathbf{x} = \mathbf{M}^{-1} \mathbf{z}$.
We see that the mean $\bar{\mathbf{x}}$ of variable $\mathbf{x}$ is $E[\mathbf{x}] = \mathbf{M}^{-1} E[\mathbf{z}] = \mathbf{0}$.

We now derive the covariance matrix $\mathbf{C}$ of $\mathbf{x}$:
\begin{align}
\mathbf{C} & = E[(\mathbf{x}-\bar{\mathbf{x}})(\mathbf{x}-\bar{\mathbf{x}})^{\top}]
= E[\mathbf{x} \mathbf{x}^{\top}] \nonumber \\
& = \mathbf{M}^{-1} \underbrace{E[ \mathbf{z} \mathbf{z}^{\top} ]}_{\mathrm{diag}(\{\sigma_i^2)\}} (\mathbf{M}^{-1})^{\top}
\label{eq:covariance}
\end{align}
The precision matrix $\mathbf{P}$ is the inverse of $\mathbf{C}$ and shares the same eigenvectors:
\begin{align}
\mathbf{P} & = \mathbf{C}^{-1} \nonumber \\
& = \mathbf{M}^{\top} \mathrm{diag}(\{1/\sigma_i^2\}) \mathbf{M}
\label{eq:precision}
\end{align}
which can be expanded to:
\begin{scriptsize}
\begin{align}
= \left[ \begin{array}{ccccc}
\frac{1}{\sigma_1^2} + \frac{1}{\sigma_2^2} & - \frac{1}{\sigma_2^2} & 0 & \ldots & \\
-\frac{1}{\sigma_2^2} & \frac{1}{\sigma_2^2} + \frac{1}{\sigma_3^2} & - \frac{1}{\sigma_3^2} & 0 & \ldots \\
\vdots & \ddots & & & \\
0 \ldots 0 & -\frac{1}{\sigma_{k-1}^2} & \frac{1}{\sigma_{k-1}^2} + \frac{1}{\sigma_{k}^2} &
\frac{1}{\sigma_k^2} & 0 \ldots \\
0 \ldots 0 & 0 & \frac{1}{\sigma_k^2} & \frac{1}{\sigma_k^2} + \frac{1}{\sigma_{k+1}^2} & -\frac{1}{\sigma_{k+1}^2} ~ 0 \ldots \\
\vdots & & & \ddots & \\
0 & \ldots & 0 & -\frac{1}{\sigma_{N}^2} & \frac{1}{\sigma_N^2}
\end{array}
\right] \nonumber
\end{align}
\end{scriptsize}
Note that $\mathbf{C}$ is always invertible since $\sigma_i^2 > 0, \forall i$.

\vspace{-0.05in}
\subsection{Optimal Graph Construction}

\vspace{-0.2in}
\begin{figure}[htb]
%\begin{minipage}[b]{.23\textwidth}
%  \hspace*{0.1in}
  \centering
  \centerline{\includegraphics[width=9.0cm]{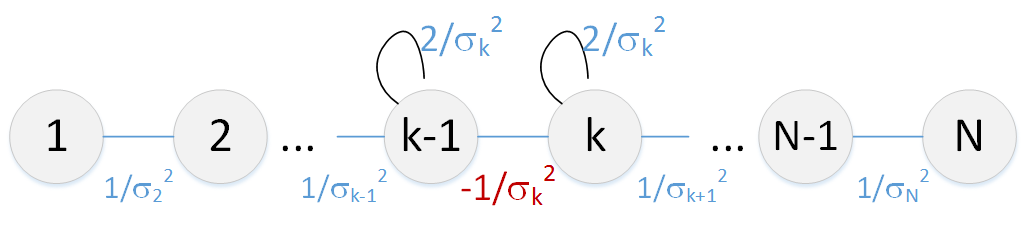}}
%  \centerline{(a)}\medskip
%\end{minipage}
%\hfill
%\begin{minipage}[b]{.23\textwidth}
%  \hspace*{0.08in}
%  \centering
%  \centerline{\includegraphics[width=3.6cm]{figures/frameStruc.eps}}
%  \centerline{(b)}\medskip
%\end{minipage}
\vspace{-0.15in}
\caption{\small{Line graph construction with one negative edge at node pair $(k-1,k)$ and two self-loops at nodes $k-1$ and $k$.}}
\label{fig:lineGraph}
\end{figure}

\vspace{-0.05in}
\subsubsection{Loopy Graph Laplacian}

\vspace{-0.05in}
We define a graph $\mathcal{G}(\mathcal{V}, \mathcal{E})$ with positive / negative edges and self-loops as follows.
There are $N$ nodes in node set $\mathcal{V}$.
Each node $i$ is connected to a neighboring node $j$ with an edge $\mathcal{E}$ if the $(i,j)$-th entry in the \textit{adjacency matrix} $\mathbf{A} \in \mathbb{R}^{N \times N}$ is non-zero, \textit{i.e.}, edge weight $A_{i,j} \neq 0$.
Because edges are undirected, $\mathbf{A}$ is symmetric.
We assume that $\mathcal{G}$ contains self-loops (positive edges to oneself), which means $A_{i,i} > 0$ for some $i$.
We define a diagonal \textit{degree matrix} $\mathbf{D} \in \mathbb{R}^{N \times N}$ as a function of $\mathbf{A}$: $D_{i,i} = \sum_{j} A_{i,j}$.
Given $\mathbf{A}$ and $\mathbf{D}$, we define the \textit{graph Laplacian matrix} $\mathbf{L} = \mathbf{D} - \mathbf{A}$, as conventionally done in the GSP literature\cite{shuman13}.

Graph Laplacian $\mathbf{L}$ does not reflect weights of the self-loops; $\mathbf{D}$ cancels out the diagonal entries in $\mathbf{A}$.
Following \cite{dorfler13}, we define a \textit{loopy graph Laplacian matrix} $\mathbf{Q} = \mathbf{L} + \textrm{diag}(\{A_{i,i}\})$ that includes contributions from self-loops.
A loopy Laplacian is an example of  a \textit{generalized graph Laplacian} \cite{biyikoglu05}, which is generally defined as the sum of a graph Laplacian matrix $\mathbf{L}$ and a diagonal matrix.

Loopy Laplacian $\mathbf{Q}$ is a symmetric, real matrix, and thus admits a set of orthogonal eigenvectors $\boldsymbol{\phi}_i$ with real eigenvalues $\lambda_i$.
Similarly done in the GSP literature \cite{shuman13}, we define here the \textit{signed graph Fourier transform} (SGFT) as the set of eigenvectors $\boldsymbol{\Phi}$ for the loopy Laplacian $\mathbf{Q}$ for a graph with negative edges.

\vspace{-0.05in}
\subsubsection{Optimal Decorrelation Transform}

\vspace{-0.05in}
We now construct a graph with self-loops, so that the resulting loopy Laplacian approximates the precision matrix $\mathbf{P}$ defined in Section \ref{subsec:markov}.
We construct an $N$-node line graph, where the $(i,i-1)$-th edge weight is assigned as follows:

\vspace{-0.1in}
\begin{small}
\begin{align}
A_{i,i-1} = \left\{ \begin{array}{ll}
1/\sigma_i^2 & \mbox{if} ~ i \in \{1, \ldots, k-1\} \cup \{k+1, \ldots, N\} \\
- 1/\sigma_i^2 & \mbox{if} ~ i = k
\end{array} \right.
\label{eq:weight1}
\end{align}
\end{small}
In other words, there is a positive edge between every node pair $(i,i-1)$ with weight $1/\sigma_i^2$, except bewteen pair $(k,k-1)$, where there is a negative edge with weight $-1/\sigma_k^2$.

Next, we add self-loops to the two nodes $k-1$ and $k$ connected by the lone negative edge:
\begin{align}
A_{i,i} = \left\{ \begin{array}{ll}
2/\sigma_k^2 & \mbox{if} ~ i \in \{k-1, k\} \\
0 & \mbox{o.w.}
\end{array} \right.
\label{eq:weight2}
\end{align}
One can now verify that the loopy graph Laplacian $\mathbf{Q}$ for this constructed graph $\mathcal{G}$ is the precision matrix $\mathbf{P}$ as $\sigma_1^2 \rightarrow \infty$.
Variance $\sigma_1^2$ of the first pixel $x_1$ tends to be large, so in practice $\mathbf{Q} \approx \mathbf{P}$.

We know that the eigenvectors of the precision matrix $\mathbf{P}$ compose the basis vectors of the \textit{Karhunen-Lo\`eve Transform} (KLT), which optimally decorrelates an input signal following a statistical model.
Because our loopy Laplacian $\mathbf{Q} \approx \mathbf{P}$, the SGFT $\boldsymbol{\Phi}$ of $\mathbf{Q}$ also approximates the KLT.
We can thus claim the following:

\vspace{-0.05in}
\begin{small}
\begin{quote}
Constructed graph $\mathcal{G}$ with one negative edge and two self-loops, where edge weights are assigned according to (\ref{eq:weight1}) and (\ref{eq:weight2}), is the \textit{optimal graph}, whose corresponding SGFT optimally decorrelates the input signal.
\end{quote}
\end{small}

\vspace{-0.05in}
\subsubsection{Definiteness of Loopy Graph Laplacian}

\vspace{-0.05in}
By definition in (\ref{eq:precision}), we see that the precision matrix $\mathbf{P}$ is \textit{positive semi-definite} (PSD):
\begin{align}
\mathbf{x}^{\top} \mathbf{P} \mathbf{x} & =
\mathbf{x}^{\top} \mathbf{M}^{\top} \textrm{diag}(\{1/\sigma_i^2\}) \mathbf{M} \, \mathbf{x} \nonumber \\
& = \| \textrm{diag}(\{1/\sigma_i\}) \mathbf{M} \, \mathbf{x} \|_2^2 \geq 0
\end{align}
The positive semi-definiteness of $\mathbf{P}$---and hence loopy Laplacian $\mathbf{Q}$ as $\sigma_1^2 \rightarrow \infty$---is ensured thanks to the self-loops introduced at the two end nodes of the negative edge.

To see the importance of the two self-loops with proper weights, consider the loopy graph Laplacian $\mathbf{Q}$ with self-loop weight $2/\sigma_k^2 - \epsilon$, $\epsilon > 0$.
The $(k-1)$-th and $k$-th entries of rows $(k-1)$ and $k$ of $\mathbf{Q}$ are then:

\vspace{-0.1in}
\begin{small}
\begin{align}
\left[ \begin{array}{cc}
\left( \frac{1}{\sigma_{k-1}^2} + \left( \frac{1}{\sigma_{k}^2} - \epsilon \right) \right) & \frac{1}{\sigma_k^2} \\
\frac{1}{\sigma_k^2} & \left(\left(\frac{1}{\sigma_k^2} - \epsilon \right) + \frac{1}{\sigma_{k+1}^2} \right)
\end{array}
\right] \nonumber
\end{align}
\end{small}\noindent
where $\epsilon = 0$ would imply that each self-loop has weight exactly $2/\sigma_k^2$.
We show that there exists edge weights $1/\sigma_{k-1}^2$, $-1/\sigma_k^2$ and $1/\sigma_{k+1}^2$ so that $\mathbf{Q}$ is indefinite.

We first define the \textit{inertia} $\mathrm{In}(\mathbf{Q})$ of $\mathbf{Q}$, where $\mathrm{In}(\mathbf{Q}) = (i^+(\mathbf{Q}), i^-(\mathbf{Q}), i^0(\mathbf{Q}))$ is a triple counting the positive, negative and zero eigenvalues of $\mathbf{Q}$.
Suppose we divide nodes in $\mathbf{Q}$ into two sets and partition $\mathbf{Q}$ accordingly:
\begin{equation}
\mathbf{Q} = \left[ \begin{array}{cc}
\mathbf{Q}_{1,1} & \mathbf{Q}_{1,2} \\
\mathbf{Q}_{1,2}^{\top} & \mathbf{Q}_{2,2}
\end{array}
\right]
\end{equation}
According to the \textit{Haysworth Inertia additivity} formula \cite{haynsworth68}, $\mathrm{In}(\mathbf{Q})$ can be computed in parts:
\begin{equation}
\mathrm{In}(\mathbf{Q}) = \mathrm{In}(\mathbf{Q}_{1,1}) +
\mathrm{In}(\mathbf{Q} / \mathbf{Q}_{1,1})
\label{eq:haysworth}
\end{equation}
where $\mathbf{Q} / \mathbf{Q}_{1,1}$ is the \textit{Schur Complement}\footnote{https://en.wikipedia.org/wiki/Schur\_complement} (SC) of block $\mathbf{Q}_{1,1}$ of matrix $\mathbf{Q}$.
%\begin{equation}
%\mathbf{L} / \mathbf{L}_{1,1} = \mathbf{L}_{2,2} -
%\mathbf{L}_{1,2}^{\top} \mathbf{L}_{1,1}^{-1} \mathbf{L}_{1,2}
%\end{equation}
Suppose we choose set $1$ to be nodes $k-1$ and $k$.
The determinant of $\mathbf{Q}_{1,1}$ can be written as:

\vspace{-0.05in}
\begin{small}
\begin{align}
|\mathbf{Q}_{1,1}| & =
\frac{1}{\sigma_{k-1}^2}\left(\frac{1}{\sigma_k^2} - \epsilon \right) +
\frac{1}{\sigma_{k-1}^2 \sigma_{k+1}^2} +
\left( \frac{1}{\sigma_k^2} - \epsilon \right)^2 + \nonumber \\
& \left( \frac{1}{\sigma_k^2} - \epsilon \right) \frac{1}{\sigma_{k+1}^2} -
\frac{1}{\sigma_k^4}
\end{align}
\end{small}
Suppose that $\sigma_{k-1}^2, \sigma_{k+1}^2 \gg \sigma_k^2$, then $|\mathbf{Q}_{1,1}|$ simplifies to:
\begin{align}
|\mathbf{Q}_{1,1}| \approx
\left( \frac{1}{\sigma_k^2} - \epsilon \right)^2 - \frac{1}{\sigma_k^4}
\end{align}
which is negative for small $\epsilon > 0$.
This implies that inertia $\mathrm{In}(\mathbf{Q}_{1,1})$ has at least one negative eigenvalue.
From (\ref{eq:haysworth}), it implies also that $\mathbf{Q}$ has at least one negative eigenvalue, and $\mathbf{Q}$ is indefinite.

The important lesson from the above analysis is the following:
\textit{our constructed loopy Laplacian $\mathbf{Q}$ requires properly weighted self-loops to be PSD, so that its eigenvalues can be properly interpreted as graph frequencies and its eigenvectors as graph frequency components.}

\vspace{-0.05in}
\subsubsection{PWS Signal Approximation}

\vspace{-0.05in}
To see more intuitively why basis vectors in SGFT can compactly approximate PWS signals, we show that the first eigenvector $\boldsymbol{\phi}_1$ of the loopy Laplacian $\mathbf{Q}$ corresponding to eigenvalue $\lambda_1 = 0$ is a \textit{piecewise constant} (PWC) signal.
Specifically, we define a PWC vector $\mathbf{v}$ as follow:
\begin{equation}
v_i = \left\{ \begin{array}{ll}
1 & \mbox{if} ~ 1 \leq i < k \\
-1 & \mbox{if} ~ k \leq i \leq N
\end{array} \right.
\end{equation}
We state the following claim formally.
\begin{lemma}
$\mathbf{v}$ is the first (unnormalized) eigenvector $\boldsymbol{\phi}_1$ of loopy Laplacian $\mathbf{Q}$ corresponding to eigenvalue $\lambda_1 = 0$.
\end{lemma}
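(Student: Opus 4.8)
The plan is to verify directly that $\mathbf{Q}\mathbf{v} = \mathbf{0}$, and then invoke the positive semi-definiteness of $\mathbf{Q}$ established above to conclude that $0$ is in fact the \emph{smallest} eigenvalue, so that $\mathbf{v}$ legitimately carries the label $\boldsymbol{\phi}_1$.

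For the first part, I would work with the factorization $\mathbf{Q} = \mathbf{M}^{\top}\,\mathrm{diag}(\{1/\sigma_i^2\})\,\mathbf{M}$ (reading $1/\sigma_1^2 = 0$ in the limit $\sigma_1^2 \to \infty$, which is precisely the regime in which $\mathbf{Q}$ equals the precision matrix $\mathbf{P}$), and compute $\mathbf{M}\mathbf{v}$ row by row. Rows $2,\ldots,k-1$ and $k+1,\ldots,N$ of $\mathbf{M}$ form the difference $v_i - v_{i-1}$, which vanishes because $\mathbf{v}$ is constant on each of the two blocks $\{1,\ldots,k-1\}$ and $\{k,\ldots,N\}$; row $k$ forms the \emph{sum} $v_k + v_{k-1} = -1 + 1 = 0$; and row $1$ gives $v_1 = 1$. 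Hence $\mathbf{M}\mathbf{v} = \mathbf{e}_1$, so $\mathrm{diag}(\{1/\sigma_i^2\})\mathbf{M}\mathbf{v} = (1/\sigma_1^2)\mathbf{e}_1 \to \mathbf{0}$ and therefore $\mathbf{Q}\mathbf{v} = \mathbf{M}^{\top}\mathbf{0} = \mathbf{0}$. Equivalently, one can skip the limit and simply multiply $\mathbf{v}$ against each row of the explicit tridiagonal-plus-self-loop form of $\mathbf{Q}$: every interior row pairs its entries $(-1/\sigma_i^2,\ 1/\sigma_i^2 + 1/\sigma_{i+1}^2,\ -1/\sigma_{i+1}^2)$ against a locally constant $\pm 1$ pattern and telescopes to $0$, and in rows $k-1$ and $k$ the self-loop contributions of $2/\sigma_k^2$ (which turn the off-diagonal $-1/\sigma_k^2$ of the negative edge into the $+1/\sigma_k^2$ appearing in $\mathbf{P}$) are exactly what makes the sign flip at the block boundary cancel.

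For the second part, since $\mathbf{Q}$ was shown to be PSD, all its eigenvalues are non-negative, so $\lambda = 0$ is the minimum eigenvalue and any null vector of $\mathbf{Q}$ is a valid first eigenvector; $\mathbf{v}$ qualifies. If one additionally wants simplicity of $\lambda_1 = 0$, note that $\mathbf{x}^{\top}\mathbf{Q}\mathbf{x} = \|\mathrm{diag}(\{1/\sigma_i\})\mathbf{M}\mathbf{x}\|_2^2$ vanishes iff $\mathbf{M}\mathbf{x}$ is supported only on its first coordinate, i.e.\ $\mathbf{x} \in \mathrm{span}(\mathbf{M}^{-1}\mathbf{e}_1)$; together with $\mathbf{M}\mathbf{v} = \mathbf{e}_1$ this shows the null space is one-dimensional and spanned by $\mathbf{v}$.

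I do not anticipate a genuine obstacle: the argument is a direct verification. The only points requiring care are the bookkeeping of the two special rows $k-1$ and $k$ (including the sign flip in row $k$ of $\mathbf{M}$) and being explicit about the $\sigma_1^2 \to \infty$ convention, so that the first row of $\mathbf{Q}$ does not spoil $\mathbf{Q}\mathbf{v} = \mathbf{0}$.
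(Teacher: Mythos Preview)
Your argument is correct. The paper's proof is the direct row-by-row verification you sketch as your ``equivalently'' alternative: it observes that rows $i\notin\{k-1,k\}$ of $\mathbf{Q}$ satisfy $Q_{i,i}=-\sum_{j\neq i}Q_{i,j}$, so a locally constant $\mathbf{v}$ annihilates them, and then checks rows $k-1$ and $k$ by hand using the sign flip in $\mathbf{v}$. Your primary route through the factorization $\mathbf{Q}=\mathbf{M}^{\top}\mathrm{diag}(\{1/\sigma_i^2\})\mathbf{M}$ and the computation $\mathbf{M}\mathbf{v}=\mathbf{e}_1$ is a genuinely cleaner packaging: it avoids case-by-case row inspection, makes the role of the $\sigma_1^2\to\infty$ limit explicit, and as a bonus yields simplicity of the zero eigenvalue (the null space is $\mathrm{span}(\mathbf{M}^{-1}\mathbf{e}_1)$), which the paper does not address. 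You also close a small gap the paper leaves open by explaining why $0$ is the \emph{smallest} eigenvalue via the PSD property; the paper's proof shows only $\mathbf{Q}\mathbf{v}=\mathbf{0}$ and tacitly relies on PSD for the ``first'' label.
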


\begin{proof}
Examining the entries in $\mathbf{Q}$ (precision matrix $\mathbf{P}$ in (\ref{eq:precision}) for $\sigma_1^2 = \infty$), we see that, with the exception of $(k-1)$-th and $k$-th rows, each row $i$ satisfies the condition $Q_{i,i} = - \sum_{j | j \neq i} Q_{i,j}$.
Hence $\mathbf{v}$ with the same constant value for entries $i-1$ to $i+1$ of row $i$ (if they exist) will sum to $0$.
For the $(k-1)$-th and $k$-th rows, if their respective off-diagonal entries $k$ and $k-1$ have negative sign instead, then again for each row the sum of off-diagonal entries equals the diagonal entry.
In $\mathbf{v}$, entries $k-2$ and $k-1$ have the opposite sign (but same magnitude) as entries $k$ and $k+1$, hence multiplying $\mathbf{v}$ to $(k-1)$-th and $k$-th rows will also result in $0$.
\end{proof}

\begin{figure}[htb]
\begin{minipage}[b]{.23\textwidth}
%  \hspace*{0.1in}
  \centering
  \centerline{\includegraphics[width=4.5cm]{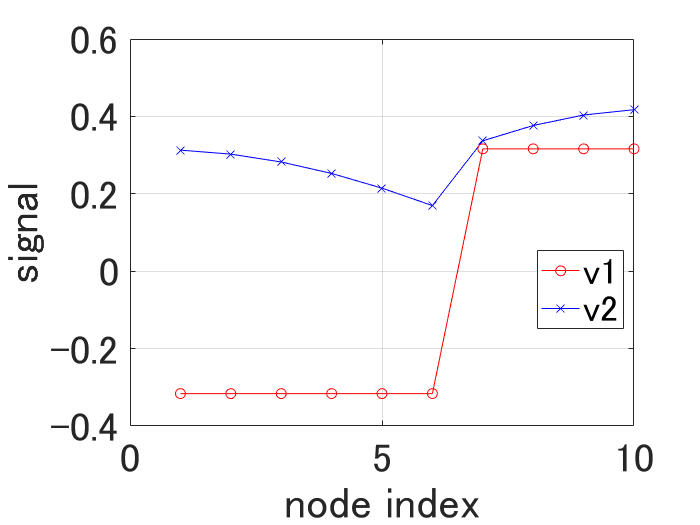}}
  \centerline{\small{(a) SGFT basis vectors}}\medskip
\end{minipage}
\hfill
\begin{minipage}[b]{.23\textwidth}
%  \hspace*{0.08in}
  \centering
  \centerline{\includegraphics[width=4.5cm]{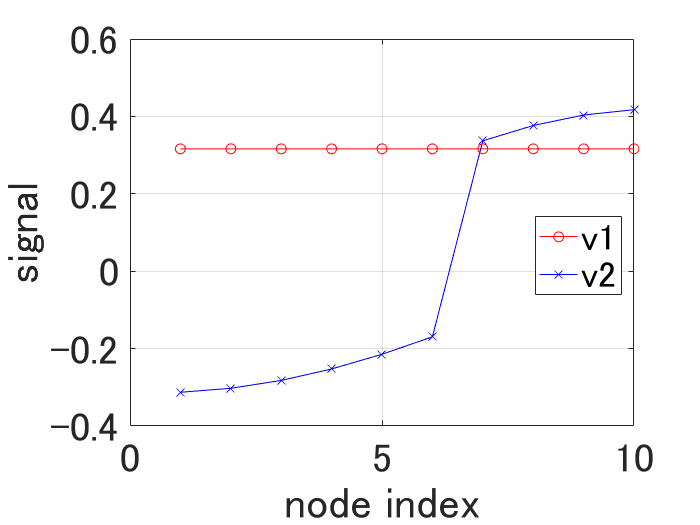}}
  \centerline{\small{(b) GFT basis vectors}}\medskip
\end{minipage}
\vspace{-0.2in}
\caption{\small{First two eigenvectors of: a) loopy Laplacian $\mathbf{Q}$ for a 10-node graph with negative edge weight $-0.1$ between nodes $6$ and $7$; b) graph Laplacian $\mathbf{L}$ for the same graph with small edge weight $0.1$ between nodes $6$ and $7$. Other edge weights are $1$.}}
\label{fig:ex1}
\end{figure}

This means that the first eigenvector $\boldsymbol{\phi}_1$ of $\mathbf{Q}$ alone can well approximate the shape of a PWS signal.
This is in contrast to the second eigenvector of graph Laplacian $\mathbf{L}$ with a small positive edge weight across node pair $(k-1,k)$, which approaches PWC behavior as the small weight tends to $0$.
See Fig.\;\ref{fig:ex1} for an illustration of the first two eigenvectors of $\mathbf{Q}$ for a $10$-node line graph with a negative edge of weight $-0.1$, and the first two eigenvectors of $\mathbf{L}$ for the same graph with the negative edge replaced by a positive edge of weight $0.1$.

\vspace{-0.08in}
\section{Depth Image Coding}
\label{sec:code}
\vspace{-0.05in}
Inspired by the analysis for the 1D case in Section \ref{sec:transform}, we construct a depth image coding scheme where each $N \times N$ block is coded using an appropriate graph.
As done in \cite{hu15}, we assume first that object contours in the image are detected and encoded efficiently using \textit{arithmetic edge coding} (AEC) \cite{daribo14} as side information (SI). 
For a given block, if there are no contours that cross it, then the block is sufficiently smooth and is coded using DCT.
If there is a contour that crosses the block, then we perform SGFT transform coding as follows. 

We first draw a 4-connected graph $\mathcal{G}$ for a $N \times N$ block; \textit{i.e.}, each pixel is represented by a node and is connected to its four horizontal and vertical adjacent pixels. 
For each connected node pair that do not cross a detected contour, we assign a \textit{positive} edge weight $1$. 
For each connected node pair $(i,j)$ that cross a contour, we assign a \textit{negative} weight $-w < 0$, where $w > 0$, and add a self-loop of weight $2 w$ to each end node. 
We tune $w$ per image and the value is encoded separately.
Because the graph construction depends only on the coded contours, there is no additional overhead to code the graph explicitly.
Having constructed graph $\mathcal{G}$, we compute the loopy Laplacian $\mathbf{Q}$ and its eigenvectors $\boldsymbol{\Phi}$ as the SGFT matrix for transform coding. 
SGFT coefficients are quantized and entropy coded as done in \cite{hu15}.

\vspace{-0.08in}
\section{Experiments}
\label{sec:results}
\vspace{-0.05in}
To evaluate the coding performance of our proposed SGFT for PWS depth images, we use two $448 \times 368$ depth images from the Middlebury dataset\footnote{http://vision.middlebury.edu/stereo/}: \texttt{Teddy} and \texttt{Cones}.
We compare for the two images the rate-PSNR performance of our proposed SGFT against DCT and weighted GFT (WGFT) proposed in \cite{hu15}, which uses a pre-trained non-negative weight to represent the weak correlation between two spatially adjacent pixels that cross a detected image contour.
For SGFT, we search for the optimal negative edge weight per image, which is transmitted as SI.
Following the coding scheme proposed in \cite{hu15}, as explained in Section \ref{sec:code}, we only perform SGFT / WGFT on edge blocks which are detected and coded using AEC \cite{daribo14}.
The block size of SGFT and WGFT is set to $4 \times 4$, and that of DCT is $8 \times 8$.
We use the single-resolution implementation of WGFT  in \cite{hu15}.
Edge-aware intra-prediction \cite{shen10icip} is performed per block prior to transform coding of the depth block; thus the prediction residual block is much closer to an AC signal than the original block, and our statistical model discussed in Section \ref{subsec:markov} is a reasonable fit.
The set of quantization parameters (QP) used for SGFT and WGFT is QP = [16 24 32 40 48], whereas QP = [40 42 44 46 48] for DCT.

Fig.\;\ref{fig:RD_Performance} compares the Rate-PSNR performances of SGFT, WGFT, and DCT for \texttt{Teddy} and \texttt{Cones} for a typical PSNR range.
As shown in Fig.\;\ref{fig:RD_Performance}, both SGFT and WGFT significantly outperform DCT by up to $5$dB for \texttt{Teddy} and $6$dB for \texttt{Cones} in PSNR.
Our proposed SGFT achieves further $0.3$ to $0.5$dB coding gain in PSNR compared to WGFT at some bitrates.
Though the additional coding gain from SGFT is not very large, \textit{we have empirically demonstrated, for the first time in the literature, that a statistical model specifying anti-correlation---and its associated optimal decorrelation graph transform in SGFT---can be effectively used in an image coding scenario}.

\begin{figure}

\begin{minipage}[b]{.48\linewidth}
 \centering
 \centerline{\includegraphics[width=4.45cm]{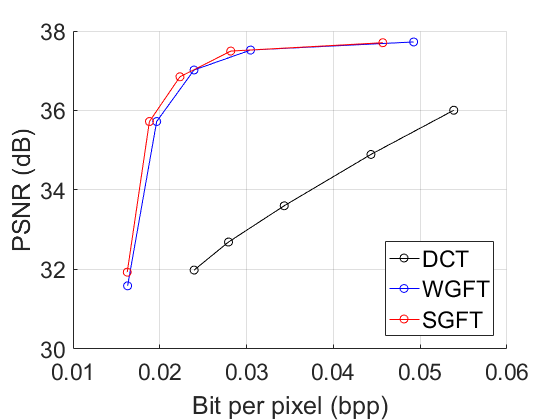}}
% \vspace{0.2cm}
 \centerline{(a) \texttt{teddy}}\medskip
\end{minipage}
\vspace{-0.15in}
\begin{minipage}[b]{.48\linewidth}
 \centering
 \centerline{\includegraphics[width=4.45cm]{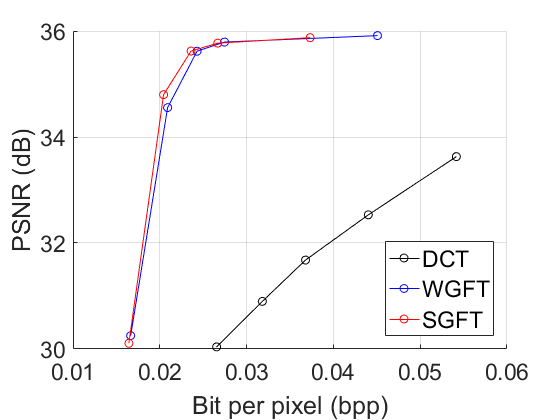}}
% \vspace{0.2cm}
 \centerline{(b) \texttt{cones}}\medskip
\end{minipage}
\vspace{-0.05in}
\caption{\small{PSNR vs. Rate using SGFT, WGFT, and DCT for two depth images: (a) \texttt{Teddy}, and (b) \texttt{Cones}.}}
\label{fig:RD_Performance}
\end{figure}
% \vspace{-0.3in}

\vspace{-0.08in}
\section{Conclusion}
\label{sec:conclude}
\vspace{-0.05in}
We propose a new graph-based transform for depth image coding called \textit{signed graph Fourier Transform} (SGFT), based on a graph that captures inter-pixel correlations and anti-correlations. 
Our constructed graph is optimal in the sense that its loopy graph Laplacian $\mathbf{Q}$ approximates the precision matrix of a one-state Markov model, and hence the resulting SGFT approximates the optimal KLT.
We show that the self-loops in the graph are important to ensure $\mathbf{Q}$ is positive semi-definite, and prove that the first eigenvector of $\mathbf{Q}$ is piecewise constant.
Experimental results show that a block-based coding scheme using SGFT outperforms a previous graph transform scheme using only positive graph edges.

Though we focus on depth image coding in this paper, we believe that the simple graph construction with negative edges and corresponding self-loops and unique characteristics of SGFT basis can be useful in a broad range of image processing tasks, such as image restoration and enhancement.

% To start a new column (but not a new page) and help balance the last-page
% column length use \vfill\pagebreak.
% -------------------------------------------------------------------------
%\vfill
%\pagebreak

\begin{small}
\bibliographystyle{IEEEtran}
\bibliography{ref2}
\end{small}

%\begin{footnotesize}
%\bibliographystyle{IEEEtran}
%\bibliography{ref2}
%\end{footnotesize}

\end{document}